\documentclass[10pt,twocolumn,letterpaper]{article}

 \usepackage{cvpr}              % To produce the CAMERA-READY version
\definecolor{cvprblue}{rgb}{0.21,0.49,0.74}
\usepackage[pagebackref,breaklinks,colorlinks,allcolors=cvprblue]{hyperref}
\usepackage{amsthm}  % 核心宏包：用于定义定理/假设环境
\usepackage{multirow}
\usepackage{graphicx}
\usepackage{pifont}
\usepackage{makecell} % 提供\makecell命令
\usepackage{caption}
\usepackage[accsupp]{axessibility}  % Improves PDF readability for those with disabilities.

\usepackage[numbers]{natbib}  % for pass arXiv

\newtheorem{hypothesis}{Hypothesis}         % 全局连续编号（如 H1, H2）

\theoremstyle{plain}  % 定理用plain样式（内容斜体，突出逻辑陈述）
\newtheorem{theorem}{Theorem}         % 全局连续编号（推荐短文档）

\newtheorem{lemma}{Lemma}

\newcommand{\smileemoji}{\includegraphics[height=0.8em]{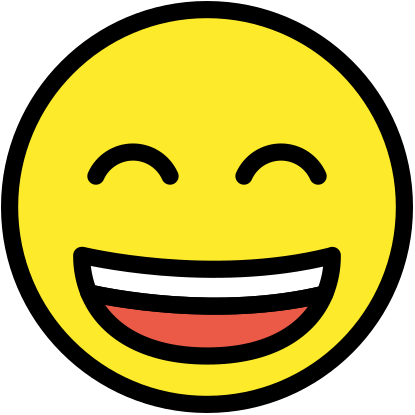}}
\newcommand{\tencent}{\includegraphics[height=0.7em]{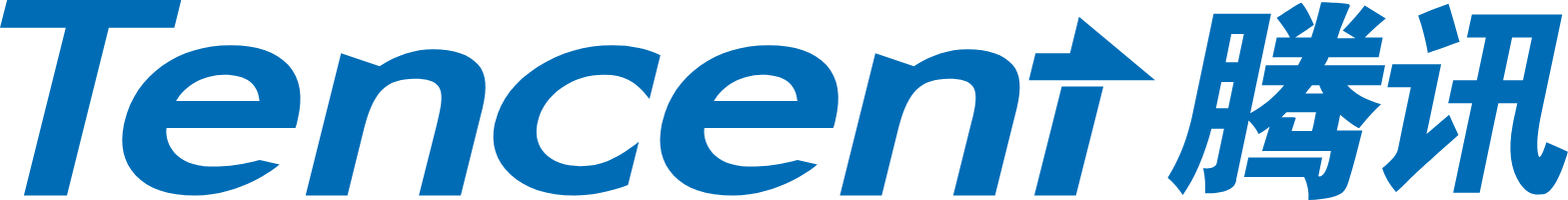}}
\newcommand{\bytedance}{\includegraphics[height=1em]{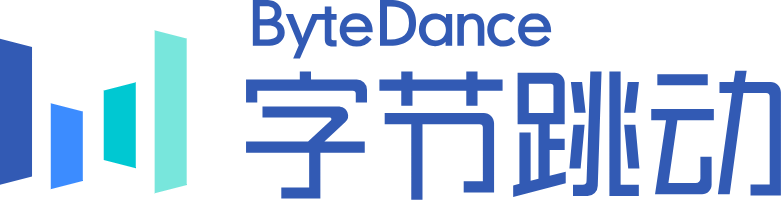}}
\newcommand{\antgroup}{\includegraphics[height=1em]{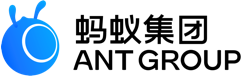}}
\newcommand{\baidu}{\includegraphics[height=1em]{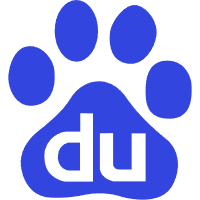}}
\newcommand{\zjdx}{\includegraphics[height=1em]{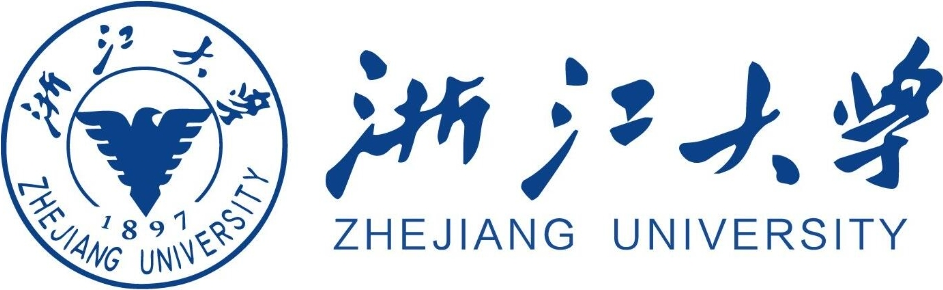}}
\newcommand{\cuhk}{\includegraphics[height=1em]{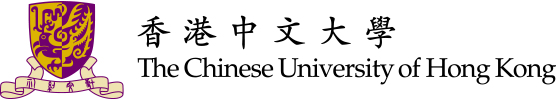}}
\newcommand{\bjjt}{\includegraphics[height=1em]{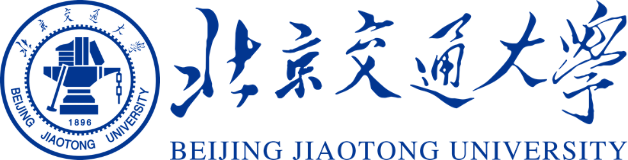}}
\newcommand{\fudan}{\includegraphics[height=1em]{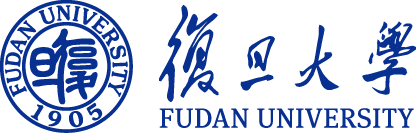}}
\newcommand{\nanda}{\includegraphics[height=1em]{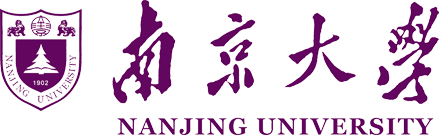}}
\newcommand{\mystar}{\includegraphics[height=0.8em]{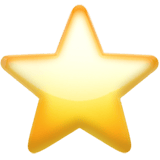}}

\def\paperID{***} % *** Enter the Paper ID here
\def\confName{CVPR}
\def\confYear{2026}

\title{IP-Adapter Is All You Need: Towards Fine-Tuning-Free Diffusion-Based Talking Face Generation}

\author{Hao Wu$^1$ \quad
		Xiangyang Luo$^1$ \quad
		Hao Wang$^2$ \quad
		Jiawei Zhang$^3$ \quad
		Yi Zhang$^{1,}$\footnotemark[2] \quad
		Jinwei Wang$^{4,}$\footnotemark[2] \\
		\textsuperscript{1}Information Engineering University \quad
		\textsuperscript{2}Huai'an University \\
		\textsuperscript{3}Chongqing University of Post and Telecommunications \quad
		\textsuperscript{4}Nankai University
}

\begin{document}
%\maketitle
\twocolumn[{
	\maketitle
	\begin{center}
		\captionsetup{type=figure}
%		\vspace{-2em}
		\includegraphics[width=0.7\textwidth]{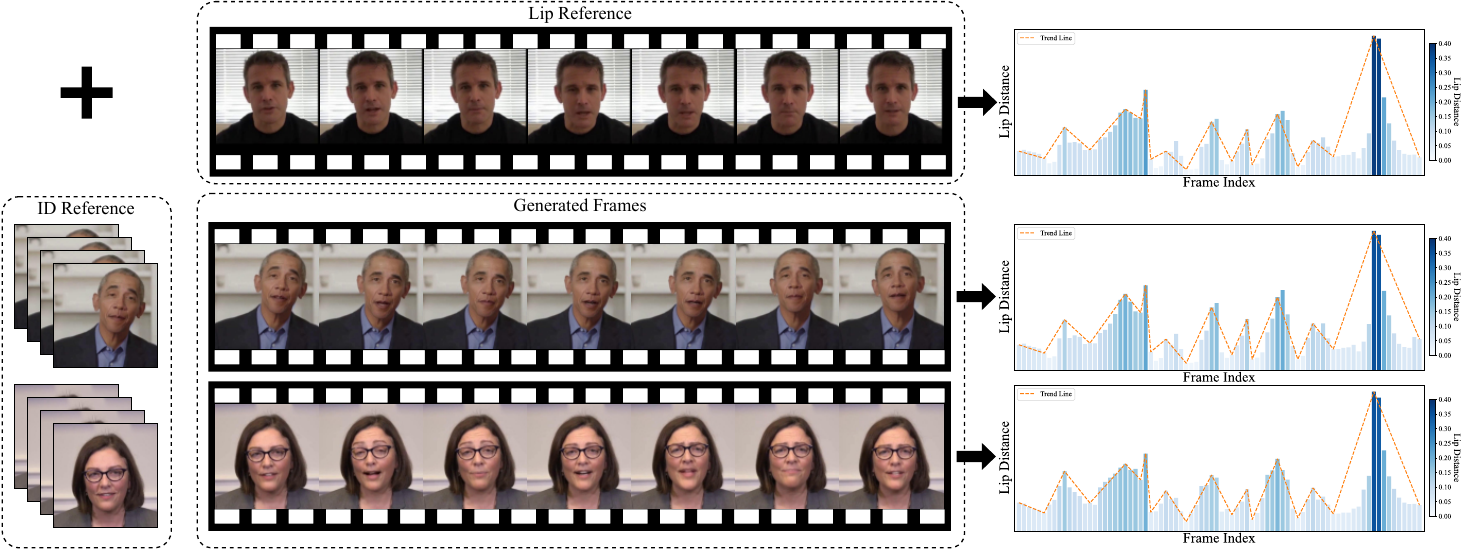}
%		\vspace{-2em}
		\captionof{figure}{Visualization of videos generated by the proposed fine-tuning-free diffusion-based framework, FreeTalkDiff. The framework produces realistic, high-definition, lip-synced talking videos from lip and ID references. The accompanying lip-distance bar chart quantitatively illustrates the close alignment between the generated and reference lip motions, highlighting precise lip-control capability.}
	\end{center}
}]

\renewcommand{\thefootnote}{\fnsymbol{footnote}}
\footnotetext[2]{Corresponding authors. E-mail: tzyy4001@sina.com, wangjinwei@nankai.edu.cn} 
\begin{abstract}
With the rapid advancement of diffusion models, talking face generation has made remarkable progress. However, existing diffusion-based methods still require task-specific fine-tuning and large-scale audiovisual datasets, resulting in high computational costs that hinder scalability and accessibility of diffusion-based approaches across the research community. To address this, we propose a fine-tuning-free paradigm that directly performs talking face generation using the pretrained weights of Stable Diffusion and IP-Adapter. This backbone leverages the visual embedding capability of IP-Adapter to mine lip-related semantics from the pretrained Stable Diffusion. To address the challenges of identity drift, synchronization errors, and temporal instability, we also design three trainable-parameter-free components: 1) the Structurist, which explicitly disentangles and reassembles lip and appearance features to mitigate identity drift and appearance distortion; 2) the Structure Controller, which adaptively refines embeddings based on quasi-monotonic motion trends for precise lip synchronization; and 3) the Noise Sensor, which introduces a Gaussian prior to detect and suppress flicker and jitter artifacts and enhance temporal consistency. Experimental results show that our method outperforms existing SOTA approaches in both lip-sync accuracy (at least 0.16 gain in PCLD) and visual fidelity (at least 0.7 improvement in FID), establishing a novel fine-tuning-free diffusion framework for talking face generation. The code is available at \url{https://github.com/tlemangen/FreeTalkDiff}.
\end{abstract}    
\section{Introduction}
\label{sec:intro}

\begin{table*}[t]
	\centering
	\caption{Information for diffusion model-based methods. ``\#Params" indicates the number of parameters. ``—" indicates no available data. ``\mystar" denotes the relative fine-tuning cost through a qualitative summary; more stars indicate higher cost.}
	\resizebox{\textwidth}{!}{%
		\begin{tabular}{@{}c|c|c|c|c|c|c@{}}
			\toprule
			\textbf{Method} & \textbf{Affiliations} & \textbf{Trainable \#Params } & \textbf{Train Sets} & \textbf{GPUs for Fine-tuning} & \textbf{Training Steps} & \textbf{Fine-tuning Cost} \\ 
			\cmidrule{1-7}
			AniPortrait(arXiv'24)~\cite{wei2024aniportrait} & \tencent & 2.59B & HDTF~\cite{zhang2021flow}, VFHQ~\cite{xie2022vfhq}, CelebV-HQ~\cite{zhu2022celebv} & 4 NVIDIA A100 & — & \mystar\mystar\mystar\mystar \\
			Loopy(arXiv'24)~\cite{jiang2024loopy} & \bytedance, \zjdx & — & Self-build, HDTF~\cite{zhang2021flow} & 24 NVIDIA A100 & — & \mystar\mystar\mystar\mystar\mystar \\
			MuseTalk(arXiv'24)~\cite{zhang2024musetalk} & \tencent, \cuhk & 0.85B & HDTF~\cite{zhang2021flow} & 2 NVIDIA H20 & 300k & \mystar\mystar\mystar \\
			LatentSync(arXiv'24)~\cite{li2024latentsync} & \bytedance, \bjjt & 1.64B & VoxCeleb2~\cite{chung2018voxceleb2}, HDTF~\cite{zhang2021flow} & — & — & \mystar\mystar\mystar \\
			EchoMimic(AAAI'25)~\cite{chen2025echomimic} & \antgroup & 2.15B & Self-build, HDTF~\cite{zhang2021flow}, CelebV-HQ~\cite{zhu2022celebv} & 8 NVIDIA A100 & 60k & \mystar\mystar\mystar\mystar \\
			Hallo2(ICLR'25)~\cite{cui2025hallo} & \fudan, \baidu, \nanda & 2.39B & Self-build, HDTF~\cite{zhang2021flow}, CelebV-HQ~\cite{zhu2022celebv} & 8 NVIDIA A100 & 113k & \mystar\mystar\mystar\mystar \\
			Sonic(CVPR'25)~\cite{ji2025sonic} & \tencent, \zjdx & 1.65B & VFHQ~\cite{xie2022vfhq}, CelebV-Text~\cite{yu2023celebv}, VoxCeleb2~\cite{chung2018voxceleb2} & — & — & \mystar\mystar\mystar \\
			\cmidrule{1-7}
			\textbf{Ours} & — & \textbf{0.00}~\smileemoji & \textbf{Not Required}~\smileemoji & \textbf{Not Required}~\smileemoji & \textbf{Not Required}~\smileemoji & \textbf{Not Required}~\smileemoji\\ 
			\bottomrule
		\end{tabular}%
	}
	\label{tab:diffusion_method_info}
\end{table*}

In recent years, talking face generation, which synthesizes talking videos of a target identity, has attracted widespread attention due to its vast application potential in areas such as entertainment~\cite{zhou2020makelttalk,ji2022eamm} and digital humans~\cite{meng2025echomimicv2,wang2025fantasytalking}. With the rapid advancement of diffusion models~\cite{ho2020denoising,song2020denoising,rombach2022high,blattmann2023stable}, their performance in image and video generation has significantly surpassed that of traditional generative adversarial networks~\cite{goodfellow2014generative,mirza2014conditional,radford2015unsupervised,goodfellow2020generative} and autoregressive models~\cite{hochreiter1997long,sak2014long,chung2014empirical}. In particular, diffusion models demonstrate unique advantages in generation quality, and multimodal conditional control, making them a major driving force in advancing talking face generation.

However, existing diffusion-based talking face generation methods require task-specific fine-tuning of diffusion models on large-scale audiovisual datasets. Given the massive parameter size and complex optimization of diffusion models, these methods often demand multiple high-performance GPUs and tens to hundreds of thousands of training steps, resulting in substantial computational and time costs. As shown in \cref{tab:diffusion_method_info}, existing diffusion-based methods typically involve fine-tuning models with billions of parameters. Representative examples include EchoMimic~\cite{chen2025echomimic} and Hallo2~\cite{cui2025hallo} were trained for 60k and 113k steps on eight A100, respectively, while Loopy~\cite{jiang2024loopy} utilized as many as twenty-four A100 GPUs. This heavy dependence on computational resources and prolonged optimization greatly limits the scalability and accessibility of diffusion-based approaches across the research community.

To overcome the aforementioned bottleneck, this paper explores a novel framework: \textbf{performing talking face generation directly with a pretrained diffusion model, without any task-specific fine-tuning.} This approach eliminates the need for additional training, thereby significantly reducing computational and time costs. However, this framework introduces a key challenge: \textbf{\textit{how to mine lip-related knowledge from the vast knowledge of the pretrained diffusion model, thereby enabling controllable lip-sync generation?}}

Our solution is motivated by an in-depth observation of pretrained model behaviors. We identify that the IP-Adapter~\cite{ye2023ip}, when paired with Stable Diffusion (SD)~\cite{rombach2022high}, naturally emphasizes lip-related features in facial representations, as the mouth region typically carries the main cues of the face. Therefore, we adopt a fine-tuning-free, controllable ``SD + IP-Adapter" combination as our backbone for talking face generation. Building upon this backbone, we further propose three novel components with no trainable parameters: 1) \textbf{Structurist Module}: It explicitly disentangles redundant appearance features in the 3D face parameter space, preserving target lip motion information while effectively removing irrelevant factors such as color and texture, thereby mitigating identity drift and appearance distortion. 2) \textbf{Structure Controller}: It adaptively refines the embedding space based on the dynamic trends of reference lip movements, improving lip-control accuracy and synchronization. 3) \textbf{Noise Sensor}: It derives a Gaussian prior through hypothesis testing and formulaically models flicker and jitter noise patterns in generated videos. The noise patterns are mathematically proven, and the module integrates a spatially adaptive temporal filter to effectively suppress noise, thereby enhancing temporal consistency in video generation. Experimental results on the CREMA~\cite{cao2014crema} and HDTF~\cite{zhang2021flow} datasets demonstrate that the ``SD + IP-Adapter" backbone and these modules collectively improve both visual quality and lip synchronization, offering a novel fine-tuning-free framework for talking face generation.

In summary, our main contributions are as follows:
\begin{enumerate}
	\item We propose a fine-tuning-free diffusion-based framework, FreeTalkDiff, for talking face generation. It is the first to directly leverage pretrained SD and IP-Adapter models, removing the need for costly fine-tuning on large datasets and greatly reducing resource overhead.
	\item We design a 3DMM-based Structurist. This module disentangles and reassembles the useful lip and appearance features, thereby reinforcing identity preservation.
	\item  We propose an adaptive Structure Controller. By dynamically refining structure embeddings in accordance with the motion trends of reference lip movements, this module significantly improves the precision of lip control, achieving more natural and accurate lip synchronization.
	\item We introduce a Gaussian-prior-based Noise Sensor, which formally models and detects flicker and jitter noise patterns in generated videos, and applies a spatially adaptive temporal filter to effectively enhance temporal consistency and realism.
\end{enumerate}

\section{Related work}
\label{sec:related_work}

\subsection{Talking face generation}

Early talking face generation methods primarily relied on GANs~\cite{goodfellow2014generative,mirza2014conditional,radford2015unsupervised,goodfellow2020generative} and autoregressive networks~\cite{hochreiter1997long,sak2014long,chung2014empirical}, which achieves notable progress in improving lip synchronization but suffers from limited vividness and naturalness. For example, Wav2Lip~\cite{prajwal2020lip} introduces a pretrained SyncNet~\cite{chung2016out} discriminator to enforce audio–visual alignment. MakeItTalk~\cite{zhou2020makelttalk} disentangles audio into content and speaker style, and uses facial landmarks as intermediates. Audio2Head~\cite{wang2021audio2head} enhances realism by predicting 6-DoF head poses and dense motion fields. Later methods such as TalkLip~\cite{wang2023seeing} and SadTalker~\cite{zhang2023sadtalker} enhance articulation accuracy and 3D-awareness, but still suffer from blurry frames and coarse lip detail due to limited model capacity.

With the rise of diffusion models~\cite{ho2020denoising,song2020denoising,rombach2022high,blattmann2023stable}, researchers have explored their potential in talking face generation. For instance, AniPortrait~\cite{wei2024aniportrait} combines 3D mesh prediction with diffusion rendering. Loopy~\cite{jiang2024loopy} incorporates an inter- and intra-clip temporal module to capture long-term motion. MuseTalk~\cite{zhang2024musetalk} introduces selective information sampling and adaptive audio modulation to improve audio–visual alignment. More recent models, LatentSync~\cite{li2024latentsync}, EchoMimic~\cite{chen2025echomimic}, and the Hallo family~\cite{xu2024hallo,cui2025hallo}, further advance temporal consistency, multimodal conditioning, and resolution. Sonic~\cite{ji2025sonic} further extend intra-clip audio perception to the global inter-clip level via motion-decoupled control and time-aware position shift fusion. Despite their advances, diffusion-based methods involve fine-tuning billion-scale parameters on large audiovisual datasets, demanding extensive GPU resources, as shown in \cref{tab:diffusion_method_info}. \textbf{This work aims to overcome this challenge by proposing a fine-tuning-free, highly synchronized, and high-fidelity solution for talking face generation.}

In addition, these methods can be divided into one-shot and few-shot paradigms based on the number of ID reference. One-shot methods rely on a single ID reference frame but often struggle with identity preservation, whereas few-shot methods leverage short reference videos to maintain personal dynamics and achieve higher fidelity. \textbf{Accordingly, our work focuses on the few-shot paradigm.}

\subsection{IP-Adapter}

In conditional diffusion-based generation, efficiently incorporating image prompts remains a key challenge. IP-Adapter~\cite{ye2023ip} provides a solution by a CLIP Image Encoder~\cite{radford2021learning} to extract a global image embedding (\ie, \textbf{structure embedding}) and aligning it with the text-conditioned space through a lightweight adapter, enabling flexible control without modifying the SD backbone. Its variant, IP-Adapter-FaceID, further introduces identity embeddings from ArcFace~\cite{deng2019arcface} to enhance identity. These developments provide the technical foundation for using pretrained IP-Adapter to achieve fine-tuning-free talking face generation.

%\subsection{3D Morphable Model}
%
%The core idea of 3D Morphable Model (3DMM)~\cite{huber2016multiresolution,blanz2023morphable} is to use a statistical model to construct low-dimensional linear subspace representations in both the shape and texture spaces of human faces. Specifically, 3DMM performs PCA on a large set of 3D face scans to obtain the mean shape $\bar{P}$, mean texture $\bar{Q}$, and the corresponding basis matrices ${v_i^P}$ and ${v_i^Q}$. The shape $P \in \mathbb{R}^{3n}$ and texture $Q \in \mathbb{R}^{3n}$ of any face can be represented as a linear combination of the mean and several principal component bases:
%\begin{equation}
%	P = \bar{P} + \sum_{i=1}^M \alpha_i \sigma_i^P v_i^P,~~~~
%	Q = \bar{Q} + \sum_{i=1}^M \beta_i \sigma_i^Q v_i^Q,
%\end{equation}
%where $\alpha_i$ and $\beta_i$ are the parameter coefficients for shape and texture, respectively, while $\sigma_i^P$ and $\sigma_i^Q$ denote the corresponding eigenvalues. By adjusting these coefficients in the parameter space, one can generate face instances with different lip shapes or identity characteristics within the low-dimensional subspace. This parametric representation is widely used in talking face generation tasks. The advantage of 3DMM lies in its ability to separate shape, which includes lip-shape information, from texture, which includes appearance information, thereby providing explicit structured conditions for talking face generation.
\section{Motivation}

\begin{figure}[t]
	\centering
	\includegraphics[width=\linewidth]{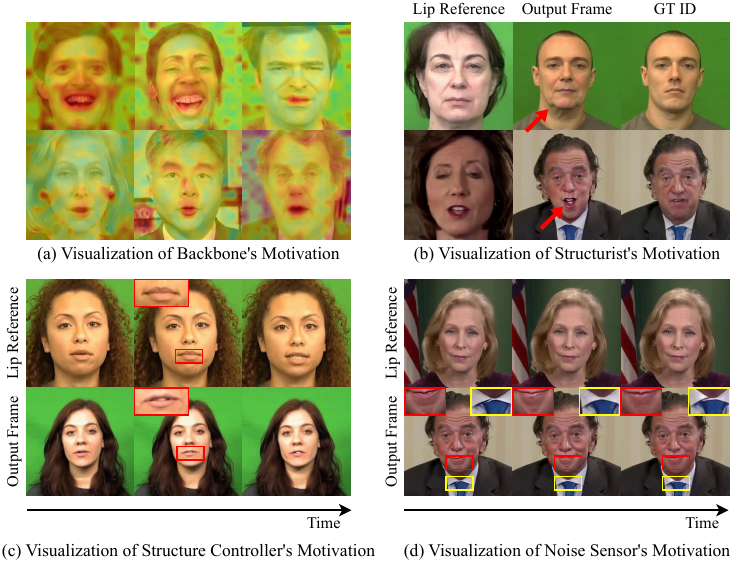}
	\caption{Visualization for motivations.}
	\label{fig:vis_motivation}
\end{figure}

\begin{figure*}[t]
\centering
\includegraphics[width=0.75\linewidth]{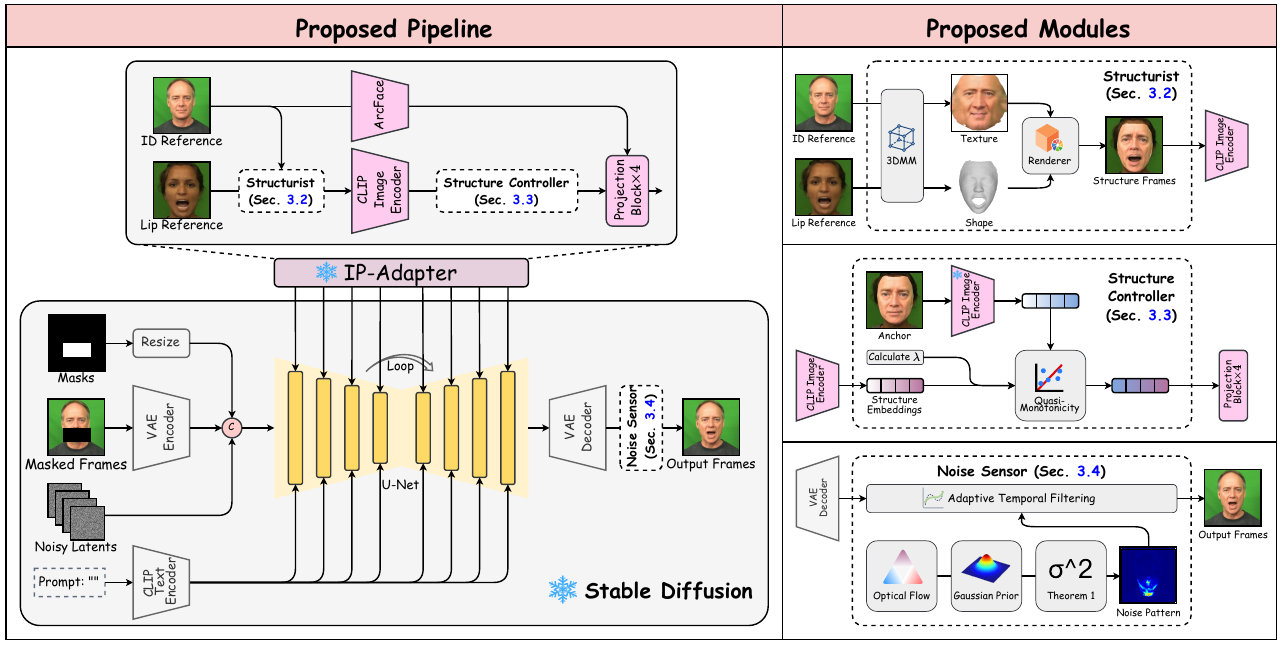}
\caption{The pipeline of the proposed method.}
\label{fig:pipeline}
\end{figure*}

\textbf{The Motivation of the Backbone.} Our goal is to achieve fine-tuning-free talking face generation to eliminate the heavy resource and time costs of training diffusion models. This leads to a key challenge: \textit{How to mine and control the lip-related information from pretrained diffusion models?} 
%\begin{quote}
%	\textit{How to mine and control the lip-related information from pretrained diffusion models?}
%\end{quote}
To explore this question, we conduct an in-depth analysis of pretrained model characteristics. As shown in \cref{fig:vis_motivation}(a), we observed that structure embeddings in IP-Adapter exhibit strong attention to the mouth region, likely because its CLIP Image Encoder emphasizes semantic alignment between image regions and textual concepts, where the mouth often carries key semantic cues in face-related contexts~\cite{radford2021learning,zhao2020hearing}. Motivated by this, we leverage IP-Adapter to inject lip cues from the target lip reference into its partner, SD. Consequently, we adopt the pretrained ``SD + IP-Adapter" backbone for fine-tuning-free talking face generation.

\textbf{The Motivation of the Structurist.} As shown in \cref{fig:vis_motivation}(b), although IP-Adapter can achieve lip-sync control, directly feeding the lip reference introduces redundant texture and color, causing identity drift and appearance distortion. This leads to a key question: \textit{How to remove redundant appearance information while preserving useful lip-related cues?} 
%\begin{quote}
%	\textit{How to remove redundant appearance information while preserving useful lip-related cues?}
%\end{quote}
%To address this, we design the Structurist module, which explicitly separates and recomposes lip-shape and appearance features in the 3D face parameter space. retaining useful appearance and lip-related cues for accurate lip-driven motion and consistent identity appearance.
One possible approach is to explicitly disentangle lip-shape and appearance representations in the 3D face parameter space, allowing the system to retain useful lip-related information while removing interference from extraneous textures, thereby improving both lip-sync accuracy and identity consistency.

\textbf{The Motivation of the Structure Controller.} We further observe that IP-Adapter fail to capture subtle lip motion (\cref{fig:vis_motivation}(c)). This motivates another question: \textit{How can the structure embedding be refined?} 
%\begin{quote}
%	\textit{How to refine structure embeddings using lip references?}
%\end{quote}
%To address this issue, we introduce the Structure Controller, an adaptive mechanism that dynamically adjusts structure embeddings according to the motion trends of reference lips and the mathematical properties of the structure embedding space.
A feasible direction is to develop an adaptive refinement mechanism that adjusts the structure embedding according to the geometric properties of the embedding space, enabling to better track subtle lip dynamics and produce more precise lip synchronization.

\textbf{The Motivation of the Noise Sensor.} Finally, flicker and jitter artifacts often appear in generated mouth regions, as shown in \cref{fig:vis_motivation}(d), disrupting temporal smoothness. This raises the question: \textit{How to suppress temporal noise to improve the visual stability?} 
%\begin{quote}
%	\textit{How to model and suppress temporal noise to improve the visual stability?}
%\end{quote}
%To solve this, we design a Noise Sensor module that derives a Gaussian prior w.r.t. the movement field and formulaically models flicker and jitter noise patterns. Based on this, it applies spatially adaptive temporal Gaussian filtering to suppress noise, significantly enhancing temporal stability and visual realism.
A promising direction is to introduce a noise–aware mechanism that models flicker and jitter over local motion variations, enabling spatially adaptive temporal filtering that selectively attenuates instability while preserving genuine lip movements, thereby improving both temporal consistency and overall visual fidelity.
\section{Methodology}

\subsection{Pipeline overview}

As shown in \cref{fig:pipeline}, our pipeline builds on the ``image inpainting SD + IP-Adapter" backbone to support few-shot paradigm. Specifically, the encoded masked frames, random noise latents and scaled masks are first concatenated, and then passed into the denoising U-Net~\cite{ronneberger2015u}. At each layer, we incorporate the FaceID version of the IP-Adapter to additionally inject identity embeddings. Each IP-Adapter layer receives the previous-layer latent, the text prompt, and two image prompts: identity and lip reference frames. The lip reference is first converted to structure frames via Structurist (\cref{sec:structurist}), and then encoded by CLIP Image Encoder and refined by Structure Controller (\cref{sec:structure_controller}) to capture structure embeddings with lip-shape and appearance cues, while the identity reference is processed by ArcFace~\cite{deng2019arcface} to enhance identity consistency. Finally, the fused embeddings are decoded into continuous talking frames via Noise Sensor (\cref{sec:noise_sensor}). \textbf{Notably, both SD and IP-Adapter are large-scale pretrained models, and our modules require no trainable parameters or fine-tuning, allowing direct application to talking face generation.}

\subsection{3DMM-based Structurist}
\label{sec:structurist}

In this section, we propose a 3DMM-based Structurist module to address the issue of redundant conditional information. As illustrated in \cref{fig:pipeline}, we first employ 3DMM~\cite{huber2016multiresolution,blanz2023morphable} to project the lip reference faces into the shape and texture spaces, obtaining low-dimensional parameter representations that describe geometric structure and appearance attributes, respectively. Specifically, 3DMM represents the shape $P \in \mathbb{R}^{3n}$ and texture $Q \in \mathbb{R}^{3n}$ of any face as linear combinations of mean components and principal bases: $P = \bar{P} + \sum_{i=1}^M \alpha_i \sigma_i^P v_i^P, Q = \bar{Q} + \sum_{i=1}^M \beta_i \sigma_i^Q v_i^Q$, 
%\begin{equation}
%	P = \bar{P} + \sum_{i=1}^M \alpha_i \sigma_i^P v_i^P,~~~~
%	Q = \bar{Q} + \sum_{i=1}^M \beta_i \sigma_i^Q v_i^Q,
%\end{equation}
where $\alpha_i$ and $\beta_i$ denote the shape and texture coefficients, respectively, and $\sigma_i^P$, $\sigma_i^Q$ are the corresponding eigenvalues. The shape parameters capture lip-motion dynamics, while the texture parameters preserve color and detailed appearance information. Subsequently, to avoid interference from appearance, we combine the texture parameters of the target identity reference with the shape parameters of the lip reference, constructing a parametric representation that encapsulates the target appearance while retaining the desired lip-motion. Finally, using a renderer~\cite{laine2020modular}, this parametric representation is mapped to the image domain to generate a structure image prompt, referred to as the \textbf{structure frame}. This prompt is then fed into the CLIP Image Encoder of the IP-Adapter to obtain the structure embedding, which serves as the lip-motion condition for SD.

\subsection{Quasi-monotonicity-based adaptive Structure Controller}
\label{sec:structure_controller}

%\begin{figure}[t]
%	\centering
%	\begin{minipage}[!t]{0.23\textwidth}
%		\centering
%		\includegraphics[width=\linewidth]{figs/clip_manifold}
%		\caption{Visualization for Structure Controller.}
%		\label{fig:clip_manifold}
%	\end{minipage}
%	\hfill
%	\begin{minipage}[!t]{0.23\textwidth}
%		\centering
%		\includegraphics[width=\linewidth]{figs/vis_flows_compressed}
%		\caption{Visualization for Noise Sensor.}
%		\label{fig:vis_flows}
%	\end{minipage}
%\end{figure}
\begin{figure*}[t]
	\centering
	\begin{minipage}[!t]{0.23\textwidth}
		\centering
		\includegraphics[width=\linewidth]{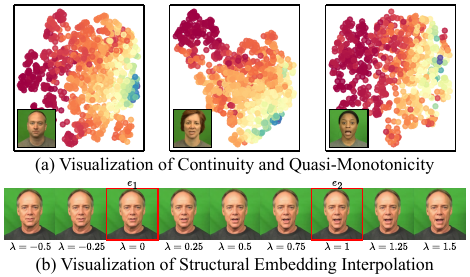}
		\caption{Visualization for Structure Controller.}
		\label{fig:clip_manifold}
	\end{minipage}
	\hfill
	\begin{minipage}[!t]{0.75\textwidth}
		\centering
		\includegraphics[width=\linewidth]{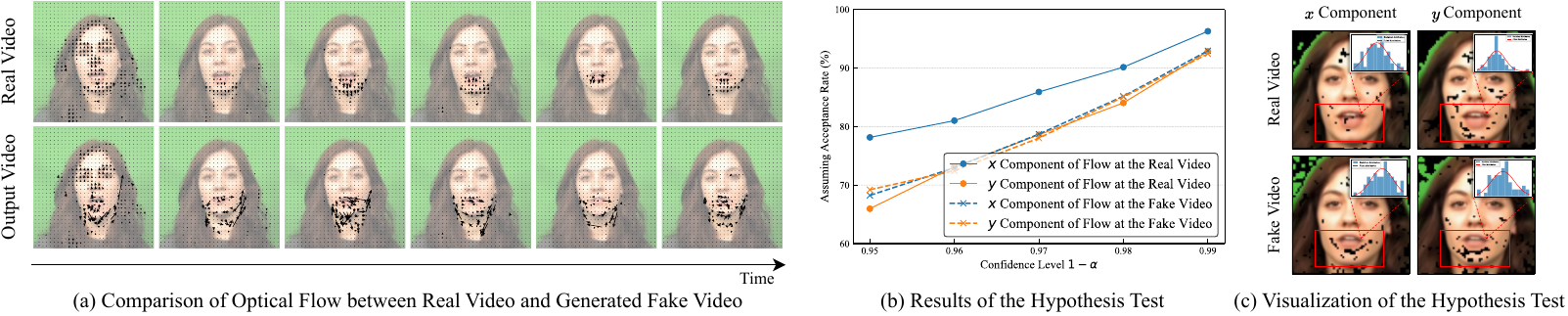}
		\caption{Visualization for Noise Sensor.}
		\label{fig:vis_flows}
	\end{minipage}
\end{figure*}

%\begin{figure}[t]
%	\centering
%	\includegraphics[width=\linewidth]{figs/clip_manifold}
%	\caption{Visualization for Structure Controller.}
%	\label{fig:clip_manifold}
%\end{figure}

To enhance the precision of lip-motion control, we introduce a quasi-monotonicity-based adaptive Structure Controller. We first analyze the mapping from structure embeddings to lip shapes and find two key properties in the structure embedding space $\mathcal{E}$.

\textbf{Continuity and Quasi-Monotonicity.} For each identity reference clip, we extract all structure embeddings and compute the Euclidean distances between upper and lower lip landmarks of generated frames as the corresponding \textit{lip distances}. The high-dimensional embeddings are then reduced to a 2D manifold using Uniform Manifold Approximation and Projection~\cite{mcinnes2018umap} for visualization, and points are color-coded by lip distance—red for smaller lips and blue for larger ones. As shown in \cref{fig:clip_manifold}(a), two observations emerge: 1) \textbf{Continuity}: embeddings of the same identity form dense, smooth clusters, implying that $\mathcal{E}$ is a continuous manifold. 2) \textbf{Quasi-Monotonicity}: color variations follow a clear directional gradient, showing that lip distance changes approximately monotonically along specific embedding directions. Formally, for any $e_1, e_2 \in \mathcal{E}$, the lip distance mapping $f: \mathcal{E} \rightarrow \mathbb{R}^+$ satisfies:
\begin{equation}
	\scalebox{0.61}{$
	\begin{cases}
		f((1 - \lambda) e_1 + \lambda e_2) \leq \min\{f(e_1), f(e_2)\} & \forall~\lambda \in (-\infty, 0] \\
		\min\{f(e_1), f(e_2)\} \leq f((1 - \lambda) e_1 + \lambda e_2) \leq \max\{f(e_1), f(e_2)\} & \forall~\lambda \in (0, 1) \\
		f((1 - \lambda) e_1 + \lambda e_2) \geq \max\{f(e_1), f(e_2)\} & \forall~\lambda \in [1, +\infty)
	\end{cases}.$}
	\label{eq:quasi_monotonicity}
\end{equation}
%\begin{equation}
%	\scalebox{0.68}{$
%		\min\{f(e_1), f(e_2)\} \leq f((1 - \lambda) e_1 + \lambda e_2) \leq \max\{f(e_1), f(e_2)\}, \forall~\lambda \in (0, 1).
%		$}
%\end{equation}
This property implies that lip distance changes approximately monotonically along the interpolation direction, while extrapolation beyond this range yields over-opened/closed lips (\cref{fig:clip_manifold}(b)).

\textbf{Adaptive Adjustment.} Leveraging the above findings, the Structure Controller dynamically adjusts structure embeddings according to the lip-motion trend of renference. Let $S_{\mathrm{anchor}}$ denote the structure frame with the smallest lip opening and $\gamma(\cdot)$ the lip-distance measurement. For the current frame $S_{\mathrm{current}}$, the adjusted embedding is computed as:
\begin{equation}
	e_{\mathrm{current}} = (1 - \lambda) \mathrm{CLIP}(S_{\mathrm{anchor}}) + \lambda \mathrm{CLIP}(S_{\mathrm{current}}),
\end{equation}
%\begin{equation}
%	\lambda = \frac{\gamma(L_{\mathrm{current}})}{\gamma(L_{\mathrm{previous}})},
%\end{equation}
where $\lambda = \frac{\gamma(L_{\mathrm{current}})}{\gamma(L_{\mathrm{previous}})}$, $\mathrm{CLIP}(\cdot)$ denotes the CLIP Image Encoder, whose output corresponds to the structure embedding, and $L_{\mathrm{current}}$ and $L_{\mathrm{previous}}$ represent the current and previous lip reference frames, respectively. When $\gamma(L_{\mathrm{current}}) > \gamma(L_{\mathrm{previous}})$, the reference lips are opening, so $\lambda > 1$, causing $e_{\mathrm{current}}$ to extrapolate in the embedding space toward a more ``open" direction according to \cref{eq:quasi_monotonicity}, generating larger lips; conversely, when $\gamma(L_{\mathrm{current}}) < \gamma(L_{\mathrm{previous}})$, the reference lips are closing, so $\lambda < 1$, pulling $e_{\mathrm{current}}$ back toward the anchor, generating smaller lips. This adaptive adjustment mechanism leverages the quasi-monotonicity of the embedding space, allowing the generated lip shapes to be further corrected along the reference lip-motion trends, thereby compensating for discrepancies between the generated and reference lip shapes.

\subsection{Gaussian-prior-based Noise Sensor}
\label{sec:noise_sensor}

%\begin{figure}[t]
%	\centering
%	\includegraphics[width=\linewidth]{figs/vis_flows}
%	\caption{Visualization for Noise Sensor.}
%	\label{fig:vis_flows}
%\end{figure}

In this section, we propose a Gaussian-prior-based Noise Sensor to alleviate temporal continuity in generated videos.

\textbf{Gaussian Prior and Hypothesis Testing.} As shown in \cref{fig:vis_flows}(a), the optical flow in the lip region of real videos exhibits smooth, continuous patterns, while that in generated videos often appears disordered and oscillatory due to jitter and flicker. 
%To analyze this discrepancy, we model the optical flow vector of pixel $(i, j)$ between consecutive frames as a 2D Gaussian distribution: $\mathbf{V}_{ij} \sim \mathcal{N}(\boldsymbol{\mu}_{ij}, \mathbf{\Sigma}_{ij})$, where $\boldsymbol{\mu}_{ij} = (\mu_{\mathbf{V}_{ij,x}}, \mu_{\mathbf{V}_{ij,y}})^T$ represents the expected optical flow in the $x$ and $y$ directions, and $\Sigma_{ij}$ denotes covariance matrix. 
To further analyze this discrepancy, we establish the following modeling hypothesis from a statistical distribution perspective.
\begin{hypothesis}\label{hyp:gaussian}
	Let the optical flow vector of pixel $(i, j)$ between two consecutive frames be $\mathbf{V}_{ij}$, which can be modeled as a 2D Gaussian distribution: $\mathbf{V}_{ij} \sim \mathcal{N}(\boldsymbol{\mu}_{ij}, \mathbf{\Sigma}_{ij})$, where the mean vector $\boldsymbol{\mu}_{ij} = (\mu_{\mathbf{V}_{ij,x}}, \mu_{\mathbf{V}_{ij,y}})^T$ represents the expected optical flow in the $x$ and $y$ directions, and $\Sigma_{ij}$ is the covariance matrix describing the correlation between the two directions.
\end{hypothesis}
We validate this Gaussian hypothesis using the Shapiro-Wilk test~\cite{shaphiro1965analysis} on lip-region optical flow sequences from both real and generated videos. As shown in \cref{fig:vis_flows}(b), when the confidence level increases to 0.99, 92.4\%–96.3\% of pixels follow a Gaussian distribution. The fitted distributions in \cref{fig:vis_flows}(c) further confirm that most optical flow components (\ie, unmasked pixels) in the lip region can be well modeled by a Gaussian prior, validating its rationality.

\begin{table*}[!t]
	\centering
	\caption{Quantitative comparison with existing talking face generation methods on the CREMA and HDTF datasets. ``{\large $\circ$}" denotes one-shot setting, ``{\large $\bullet$}" denotes few-shot setting, ``{\small $\square$}" indicates non-diffusion-based methods, and ``{\small $\blacksquare$}" indicates diffusion-based methods.}
	\resizebox{\textwidth}{!}{%
		\begin{tabular}{@{}c|c|cccccc|cccccc@{}}
			\toprule
			\multirow{2}{*}{\textbf{Methods}} & \multirow{2}{*}{\textbf{Types}} & \multicolumn{6}{c|}{\textbf{CREMA}} & \multicolumn{6}{c}{\textbf{HDTF}} \\ 
			\cmidrule{3-8} \cmidrule{9-14}
			&  & \textbf{PD}$\downarrow$ & \textbf{CSLD}$\uparrow$ & \multicolumn{1}{c|}{\textbf{PCLD}$\uparrow$} & \textbf{FID}$\downarrow$ & \textbf{LPIPS}$\downarrow$ & \textbf{CPBD}$\uparrow$ & \textbf{PD}$\downarrow$ & \textbf{CSLD}$\uparrow$ & \multicolumn{1}{c|}{\textbf{PCLD}$\uparrow$} & \textbf{FID}$\downarrow$ & \textbf{LPIPS}$\downarrow$ & \textbf{CPBD}$\uparrow$ \\ 
			\midrule
			Wav2Lip(MM'20)~\cite{prajwal2020lip} & {\large $\bullet$}{\small $\square$} & 0.01950 & 0.649 & \multicolumn{1}{c|}{0.235} & 14.9 & 0.056 & 0.134 & 0.01897 & 0.756 & \multicolumn{1}{c|}{0.358} & 12.4 & 0.057 & 0.205 \\
			MakeItTalk(TOG'20)~\cite{zhou2020makelttalk} & {\large $\circ$}{\small $\square$} & 0.02022 & 0.745 & \multicolumn{1}{c|}{0.417} & 74.1 & 0.213 & 0.023 & 0.02443 & 0.752 & \multicolumn{1}{c|}{0.250} & 87.5 & 0.293 & 0.042 \\
			Audio2Head(IJCAI'21)~\cite{wang2021audio2head} & {\large $\circ$}{\small $\square$} & 0.01506 & 0.807 & \multicolumn{1}{c|}{0.531} & 80.2 & 0.313 & 0.058 & 0.01794 & 0.760 & \multicolumn{1}{c|}{0.282} & 92.2 & 0.314 & 0.058 \\
			TalkLip(CVPR'23)~\cite{wang2023seeing} & {\large $\bullet$}{\small $\square$} & 0.01396 & 0.783 & \multicolumn{1}{c|}{0.475} & 15.4 & 0.065 & 0.146 & 0.01736 & 0.805 & \multicolumn{1}{c|}{0.424} & 14.0 & 0.060 & 0.201 \\
			SadTalker(CVPR'23)~\cite{zhang2023sadtalker} & {\large $\circ$}{\small $\square$} & 0.02442 & 0.666 & \multicolumn{1}{c|}{0.337} & 61.6 & 0.294 & 0.108 & 0.01652 & 0.816 & \multicolumn{1}{c|}{0.544} & 35.2 & 0.225 & 0.212 \\
			MuseTalk(arXiv'24)~\cite{zhang2024musetalk} & {\large $\bullet$}{\small $\blacksquare$} & 0.01853 & 0.668 & \multicolumn{1}{c|}{0.313} & 5.2 & 0.037 & 0.143 & 0.01790 & 0.788 & \multicolumn{1}{c|}{0.343} & 2.4 & 0.037 & 0.241 \\
			LatentSync(arXiv'24)~\cite{li2024latentsync} & {\large $\bullet$}{\small $\blacksquare$} & 0.01767 & 0.762 & \multicolumn{1}{c|}{0.429} & 2.2 & 0.032 & 0.189 & 0.01483 & 0.827 & \multicolumn{1}{c|}{0.540} & 1.5 & 0.025 & 0.255 \\
			EchoMimic(AAAI'25)~\cite{chen2025echomimic} & {\large $\circ$}{\small $\blacksquare$} & 0.01925 & 0.665 & \multicolumn{1}{c|}{0.310} & 71.0 & 0.291 & 0.081 & 0.01596 & 0.840 & \multicolumn{1}{c|}{0.524} & 50.1 & 0.476 & 0.145 \\
			Hallo2(ICLR'25)~\cite{cui2025hallo} & {\large $\circ$}{\small $\blacksquare$} & 0.02030 & 0.627 & \multicolumn{1}{c|}{0.121} & 7.9 & 0.175 & 0.131 & 0.01524 & 0.844 & \multicolumn{1}{c|}{0.547} & 3.9 & 0.191 & 0.211 \\ 
			Sonic(CVPR'25)~\cite{ji2025sonic} & {\large $\circ$}{\small $\blacksquare$} & 0.01774 & 0.707 & \multicolumn{1}{c|}{0.408} & 32.9 & 0.228 & 0.104 & 0.01642 & 0.832 & \multicolumn{1}{c|}{0.558} & 23.8 & 0.415 & 0.153 \\
			\midrule
			\textbf{Ours} & {\large $\bullet$}{\small $\blacksquare$} & \textbf{0.00860} & \textbf{0.887} & \multicolumn{1}{c|}{\textbf{0.711}} & \textbf{1.5} & \textbf{0.020} & \textbf{0.218} & \textbf{0.01026} & \textbf{0.883} & \multicolumn{1}{c|}{\textbf{0.718}} & \textbf{0.5} & \textbf{0.023} & \textbf{0.289} \\ 
			\bottomrule
		\end{tabular}%
	}
	\label{tab:quantitative_comparison}
\end{table*}

\textbf{Definition of Noise Pattern.} Under the Gaussian assumption, we characterize the statistical properties of jitter and flicker noise in generated videos as follows:
\begin{theorem}\label{thm:var_noise}
	For pixel $(i,j)$, let $\mathbf{V}_{ij}^{real}$ and $\mathbf{V}_{ij}^{fake}$ denote the optical flow vectors in real and generated videos, respectively. The variance of the generated jitter and flicker noise $\hat{\mathbf{R}}_{ij}$ satisfies:
%	\begin{align}
%		\sigma_{\hat{\mathbf{R}}_{ij,x}}^2 &= \sigma_{\mathbf{V}_{ij,x}^{fake}}^2 - \frac{\mathrm{Cov}^2(\mathbf{V}_{ij,x}^{fake}, \mathbf{V}_{ij,x}^{real})}{\sigma_{\mathbf{V}_{ij,x}^{real}}^2} \geq 0, \\
%		\sigma_{\hat{\mathbf{R}}_{ij,y}}^2 &= \sigma_{\mathbf{V}_{ij,y}^{fake}}^2 - \frac{\mathrm{Cov}^2(\mathbf{V}_{ij,y}^{fake}, \mathbf{V}_{ij,y}^{real})}{\sigma_{\mathbf{V}_{ij,y}^{real}}^2} \geq 0,
%	\end{align}
	\begin{equation}
		\sigma_{\hat{\mathbf{R}}_{ij}}^2 = \sigma_{\mathbf{V}_{ij}^{fake}}^2 - \frac{\mathrm{Cov}^2(\mathbf{V}_{ij}^{fake}, \mathbf{V}_{ij}^{real})}{\sigma_{\mathbf{V}_{ij}^{real}}^2} \geq 0,
	\end{equation}
	where $\sigma_{\mathbf{V}_{ij}}^{real},\sigma_{\mathbf{V}_{ij}}^{fake}$ are the standard deviations of the optical flow vectors for the real and generated videos, respectively. (The proof is detailed in \cref{sec:proof_var_noise} of Supplementary Material.)
\end{theorem}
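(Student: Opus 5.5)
The idea is to read $\hat{\mathbf{R}}_{ij}$ as the part of the generated flow that the real flow cannot explain linearly. Concretely, it is the residual of the best linear (least-squares) predictor of $\mathbf{V}_{ij}^{fake}$ from $\mathbf{V}_{ij}^{real}$. Under Hypothesis~\ref{hyp:gaussian} this predictor is also the conditional expectation. The claimed identity is then the standard Schur-complement, or partial-variance, formula, and nonnegativity comes from Cauchy--Schwarz. I would argue componentwise, treating the $x$ and $y$ components separately, which is what the scalar notation in the statement suggests.

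\textbf{Step 1: the noise model.} For a fixed pixel, write $X=\mathbf{V}_{ij}^{real}$ and $Y=\mathbf{V}_{ij}^{fake}$ for one component. Decompose
\[
Y = \mu_Y + \beta\,(X-\mu_X) + \hat{\mathbf{R}}_{ij},
\]
where $\beta\,(X-\mu_X)$ is the part of the generated motion that is consistent with the real motion. The remainder $\hat{\mathbf{R}}_{ij}$ is the jitter/flicker noise, required to be uncorrelated with $X$. Since the paper only names $\hat{\mathbf{R}}_{ij}$ without defining it, this step is really where the definition is fixed. Jointly Gaussian flows make the choice natural, because $\mathbb{E}[Y\mid X]$ is affine in $X$. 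The residual is then independent of $X$, not merely uncorrelated.

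\textbf{Step 2: identify $\beta$.} Imposing $\mathrm{Cov}(\hat{\mathbf{R}}_{ij},X)=0$ gives $\beta=\mathrm{Cov}(Y,X)/\sigma_X^2$. Equivalently, $\beta$ minimizes $\mathbb{E}[(Y-\mu_Y-\beta(X-\mu_X))^2]$. This is a one-line computation, assuming $\sigma_X^2>0$; a degenerate real flow can be excluded or handled by the convention $\beta=0$.

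\textbf{Step 3: compute the variance.} Using the uncorrelatedness, $\sigma_Y^2=\beta^2\sigma_X^2+\sigma_{\hat{\mathbf{R}}}^2$. Therefore
\[
\sigma_{\hat{\mathbf{R}}_{ij}}^2=\sigma_Y^2-\frac{\mathrm{Cov}^2(Y,X)}{\sigma_X^2},
\]
which is exactly the stated expression.

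\textbf{Step 4: nonnegativity.} By Cauchy--Schwarz, $\mathrm{Cov}^2(Y,X)\le\sigma_X^2\sigma_Y^2$, so the right-hand side is nonnegative. Equivalently, the quantity equals $\sigma_Y^2(1-\rho^2)$, where $\rho$ is the correlation coefficient between $X$ and $Y$, and $|\rho|\le 1$. Equality holds iff the generated flow is almost surely an affine function of the real flow, that is, when there is no noise. I would state this as a remark.

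\textbf{Main obstacle.} The only genuine difficulty is conceptual rather than computational. The theorem presupposes a coupling between $\mathbf{V}^{real}$ and $\mathbf{V}^{fake}$, namely a joint distribution that makes the covariance meaningful, for example paired real and generated frames sharing the same lip reference. It also presupposes a precise definition of $\hat{\mathbf{R}}_{ij}$. I would make both explicit: assume joint Gaussianity by extending Hypothesis~\ref{hyp:gaussian}, and define the noise as the conditional residual $Y-\mathbb{E}[Y\mid X]$. With those in place, the remaining steps are routine.
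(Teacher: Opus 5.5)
Your argument is correct. It reaches the paper's formula by a somewhat different decomposition. You define the noise directly as the residual of the best affine predictor of $\mathbf{V}^{fake}_{ij}$ from $\mathbf{V}^{real}_{ij}$. Its uncorrelatedness with the real flow then holds by construction, and the identity follows from the split $\sigma_Y^2=\beta^2\sigma_X^2+\sigma_{\hat{\mathbf{R}}}^2$.

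The paper instead posits a latent model $X^{fake}=Z+R$, with $Z$ an ideal flicker-free motion. It takes $\hat Z^*$ as the least-squares affine estimate of $Z$ (not of $X^{fake}$) from $X^{real}$, sets $\hat R=X^{fake}-\hat Z^*$, and expands $\mathrm{Var}(X^{fake}-\hat Z^*)$ directly. To eliminate the unobservable $\mathrm{Cov}(Z,X^{real})$, it must additionally assume that $R$ is independent of $X^{real}$; only $\mathrm{Cov}(R,X^{real})=0$ is actually used. That gives $\mathrm{Cov}(Z,X^{real})=\mathrm{Cov}(X^{fake},X^{real})$. Under this assumption the paper's $\hat R$ equals your residual up to the additive constant $\mu_R$, and both proofs end with the same Cauchy--Schwarz step.

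Your route is more economical and more general. It needs only finite second moments and $\sigma_X^2>0$, with no latent variables and no independence hypothesis, and it also gives the equality case. The paper's route buys an interpretation of $\hat R$ as the deviation of the generated flow from an ``ideal'' motion consistent with real video. That interpretation is what motivates using $\hat R$ as a flicker/jitter measure.

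Your appeal to joint Gaussianity, used to identify the predictor with $\mathbb{E}[Y\mid X]$ and make the residual independent of $X$, is unnecessary. Hypothesis 1 gives only marginal Gaussianity, and the identity is a purely second-moment fact, so you can drop that extension without loss.
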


A smaller noise variance indicates better temporal consistency between generated and real optical flow, while larger variance reflects more severe jitter and flicker. Accordingly, we define the \textbf{noise pattern} as: $D_{ij} = \left\| \sigma_{\hat{\mathbf{R}}_{ij}} \right\|_2 = \sqrt{\sigma_{\hat{\mathbf{R}}_{ij,x}}^2 + \sigma_{\hat{\mathbf{R}}_{ij,y}}^2}$, 
%\begin{equation}
%	D_{ij} = \left\| \sigma_{\hat{\mathbf{R}}_{ij}} \right\|_2
%	= \sqrt{\sigma_{\hat{\mathbf{R}}_{ij,x}}^2 + \sigma_{\hat{\mathbf{R}}_{ij,y}}^2},
%\end{equation}
where higher $D_{ij}$ values correspond to stronger jitter or flicker noise at the corresponding pixel location. 

\textbf{Adaptive Temporal Smoothness.} To suppress flicker and jitter, we design a spatially adaptive temporal filter that adjusts its strength according to the estimated noise intensity. For each pixel $(i,j)$ within a temporal window, the noise pattern $D_{ij}$ serves as the standard deviation of a 1D Gaussian kernel, thus ensuring stronger smoothing in noisier regions. The kernel is %$G_{ij}(k) = \frac{\mathrm{exp}\left(-\frac{k^2}{2 D_{ij}^2}\right)}{\sum \mathrm{exp}\left(-\frac{k^2}{2 D_{ij}^2}\right)}$, 
$G_{ij}(k) = \mathrm{softmax}(-\frac{k^2}{2 D_{ij}^2})$, 
%\begin{equation}
%	G_{ij}(k) = \frac{\mathrm{exp}\left(-\frac{k^2}{2 D_{ij}^2}\right)}{\sum \mathrm{exp}\left(-\frac{k^2}{2 D_{ij}^2}\right)},
%\end{equation}
where $k$ denotes the temporal offset from the current frame. This spatially varying design effectively mitigates flicker and enhances visual stability in generated videos.

\section{Experiments and analysis}

\subsection{Experimental setup}

\textbf{Datasets.} We evaluate our method on two widely used datasets: CREMA~\cite{cao2014crema} (with green-screen backgrounds) and HDTF~\cite{zhang2021flow} (high-definition in-the-wild videos). All clips are standardized to $512 \times 512$ resolution, 25 fps, and 16 kHz audio sampling rate.

\textbf{Baselines.} We compare against ten representative methods covering GAN-based, autoregressive, and recent diffusion-based approaches: Wav2Lip~\cite{prajwal2020lip}, MakeItTalk~\cite{zhou2020makelttalk}, Audio2Head~\cite{wang2021audio2head}, TalkLip~\cite{wang2023seeing}, SadTalker~\cite{zhang2023sadtalker}, MuseTalk~\cite{zhang2024musetalk}, LatentSync~\cite{li2024latentsync}, EchoMimic~\cite{chen2025echomimic}, Hallo2~\cite{cui2025hallo}, and Sonic~\cite{ji2025sonic}.

\textbf{Metrics.} Lip-sync accuracy is evaluated using Procrustes Disparity (PD)~\cite{gower1975generalized}, which removes rigid transformations to align 3D lip landmarks, and provides a robust measure of geometric consistency. To assess the lip dynamic consistency, we use Cosine Similarity (CSLD) and Pearson Correlation (PCLD) of Lip Distance sequences. Visual quality is assessed via FID~\cite{heusel2017gans}, LPIPS~\cite{zhang2018unreasonable}, and CPBD~\cite{bohr2013no}, measuring distribution distance, perceptual similarity, and clarity, respectively.

\textbf{Implementation Details.} Our framework builds upon Image Inpainting SD 1.5~\cite{rombach2022high} and IP-Adapter-FaceID~\cite{ye2023ip}. We adopt the DPM-Solver++ scheduler~\cite{lu2025dpm} with 20 steps for efficient sampling. The Noise Sensor employs a adaptive Gaussian temporal filter with a kernel size of 5. Text prompts are left empty to avoid semantic interference.% All experiments are conducted on a single NVIDIA RTX 4090 GPU.

\subsection{Comparative experiments}

\begin{figure*}[t]
	\centering
	\includegraphics[width=\linewidth]{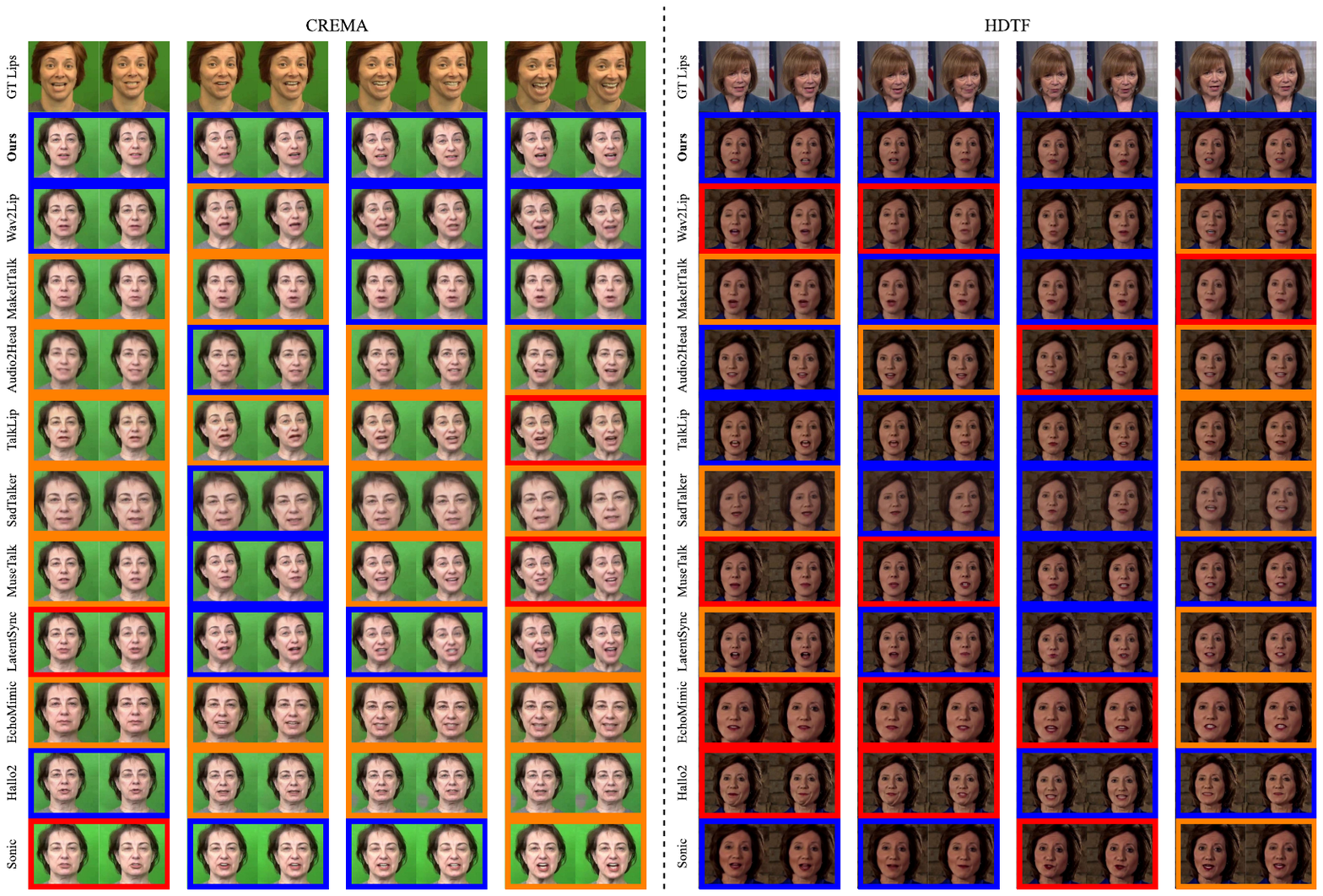}
	\caption{Visualization of generated frames. {\color{blue}Blue}, {\color{orange}orange}, and {\color{red}red} boxes denote correct, weak, and opposite lip-motion trends relative to the ground-truth reference frames, respectively.}
	\label{fig:vis_frames}
\end{figure*}

\begin{figure*}[t]
	\centering
	\includegraphics[width=0.73\linewidth]{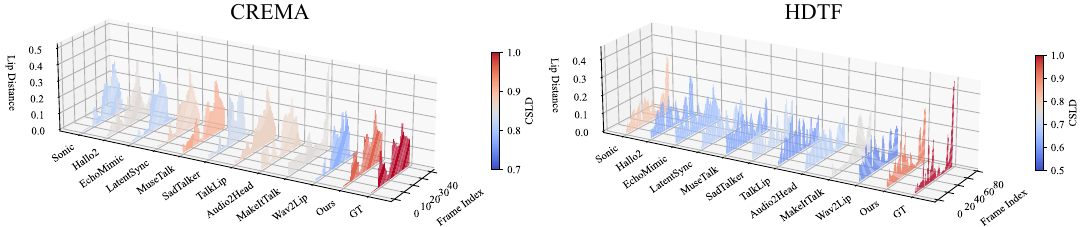}
	\caption{Visualization of lip distance. {\color{red}Redder} indicates higher CSLD, {\color{blue}bluer} indicates lower CSLD.}
	\label{fig:vis_lip_dst}
\end{figure*}

\begin{figure*}[htbp]
	\centering
	\begin{minipage}[!t]{0.23\textwidth}
		\centering
		\includegraphics[width=0.75\textwidth]{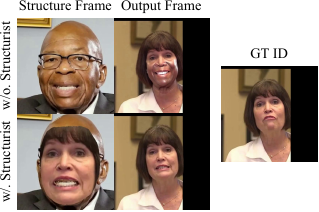} % 示例图片A
		\caption{Visualization ablation results for Structurist.}
		\label{fig:vis_wo_structure}
	\end{minipage}
	\hfill
	\begin{minipage}[!t]{0.23\textwidth}
%		\centering
%		\captionof{table}{Quantitative results of the ablation study for Structure Controller.}
%		\resizebox{\columnwidth}{!}{%
%			\begin{tabular}{@{}c|ccc|ccc@{}}
%				\toprule
%				\multirow{2}{*}{\makecell{\textbf{Structure}\\\textbf{Controller}}} & \multicolumn{3}{c|}{\textbf{CREMA}} & \multicolumn{3}{c}{\textbf{HDTF}} \\
%				\cmidrule{2-7}
%				& \textbf{PD}$\downarrow$ & \textbf{CSLD}$\uparrow$ & \textbf{PCLD}$\uparrow$ & \textbf{PD}$\downarrow$ & \textbf{CSLD}$\uparrow$ & \textbf{PCLD}$\uparrow$ \\
%				\midrule
%				\ding{55} & 0.01035 & 0.858 & 0.650 & 0.01135 & 0.859 & 0.686 \\
%				\ding{51} & \textbf{0.00860} & \textbf{0.887} & \textbf{0.711} & \textbf{0.01026} & \textbf{0.883} & \textbf{0.718} \\
%				\bottomrule
%			\end{tabular}%
%		}
%		\label{tab:ablation_structure_controller}
		\centering
		\includegraphics[width=1.0\textwidth]{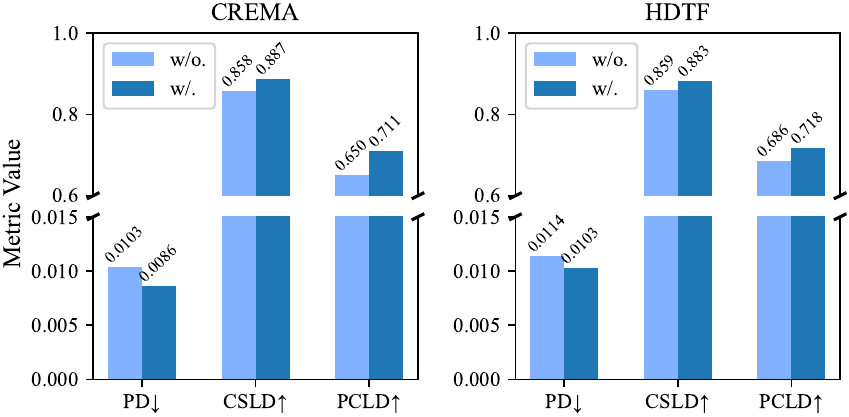} % 示例图片A
		\caption{Quantitative ablation results for Structure Controller.}
		\label{fig:ablation_structure_controller}
	\end{minipage}
	\hfill
	\begin{minipage}[!t]{0.23\textwidth}
		\centering
		\includegraphics[width=\linewidth]{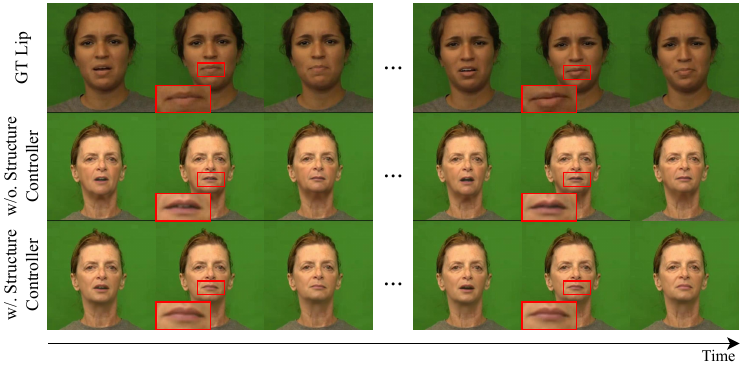}
		\caption{Visualization ablation for Structure Controller.}
		\label{fig:vis_wo_structure_controller}
	\end{minipage}
	\hfill
	\begin{minipage}[!t]{0.23\textwidth}
%		\centering
%		\captionof{table}{Quantitative results of the ablation study for Noise Sensor.}
%		\resizebox{\columnwidth}{!}{
%			\begin{tabular}{@{}c|cc|cc@{}}
%				\toprule
%				\multirow{2}{*}{\textbf{Noise Sensor}} & \multicolumn{2}{c|}{\textbf{CREMA}} & \multicolumn{2}{c}{\textbf{HDTF}} \\ 
%				\cmidrule{2-5} 
%				& \textbf{FVD}$\downarrow$ & \textbf{MNP}$\downarrow$ & \textbf{FVD}$\downarrow$ & \textbf{MNP}$\downarrow$ \\ 
%				\midrule
%				\ding{55} & 349.7 & 0.775 & 537.9 & 1.316 \\
%				\ding{51} & \textbf{141.3} & \textbf{0.120} & \textbf{115.3} & \textbf{0.118} \\
%				\bottomrule
%			\end{tabular}%
%		}
%		\label{tab:ablation_noise_sensor}
		\centering
		\includegraphics[width=\linewidth]{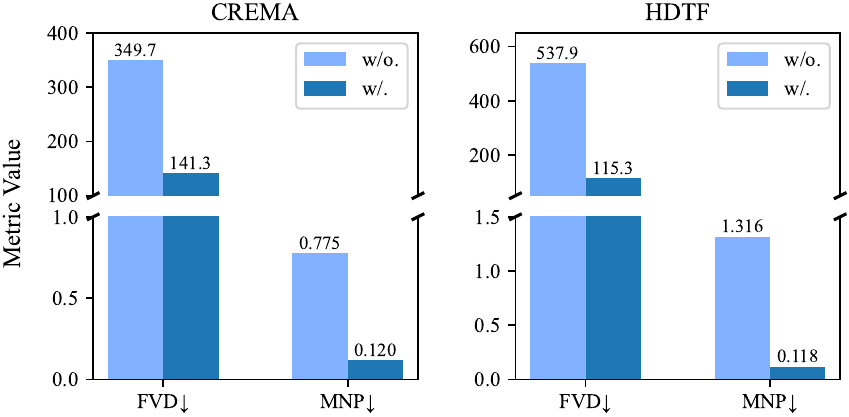}
		\caption{Quantitative ablation results for Noise Sensor.}
		\label{fig:ablation_noise_sensor}
	\end{minipage}
	
	% 两行之间的垂直间距
	%	\vspace{15pt}
	
	\begin{minipage}[!t]{0.23\textwidth}
		\centering
		\includegraphics[width=\linewidth]{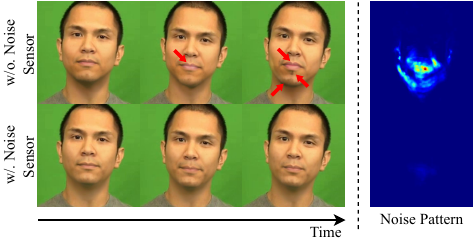}
		\caption{Visualization ablation results for Noise Sensor.}
		\label{fig:vis_wo_noise_sensor}
	\end{minipage}
	\hfill
	\begin{minipage}[!t]{0.23\textwidth}
		\centering
		\includegraphics[width=0.8\linewidth]{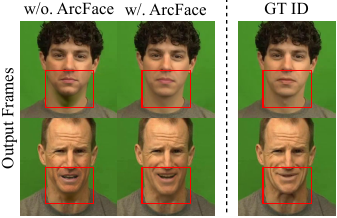}
		\caption{Visualization ablation results for ArcFace.}
		\label{fig:vis_wo_faceid}
	\end{minipage}
	\hfill
	\begin{minipage}[!t]{0.23\textwidth}
		\centering
		\includegraphics[width=0.85\linewidth]{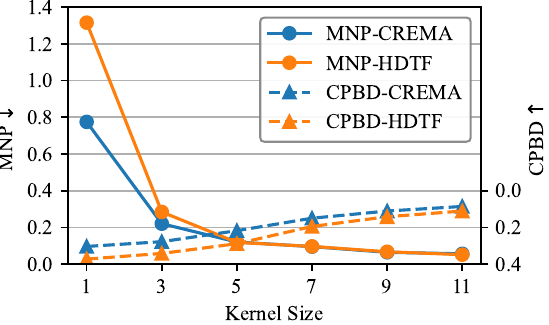}
		\caption{Quantitative ablation results for kernel size.}
		\label{fig:mnp_cpbd_kernel_analysis}
	\end{minipage}
	\hfill
	\begin{minipage}[!t]{0.23\textwidth}
		\centering
		\includegraphics[width=\linewidth]{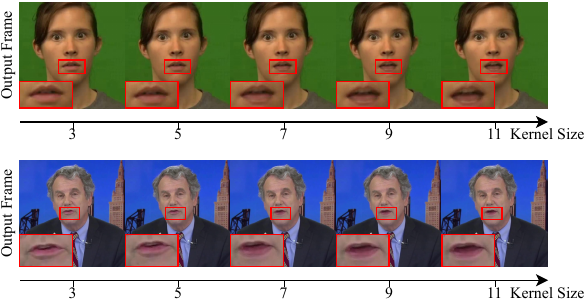}
		\caption{Visualization results with different kernel sizes.}
		\label{fig:vis_diff_ks}
	\end{minipage}
\end{figure*}

\subsubsection{Quantitative comparison}

This section reports quantitative comparisons with ten representative talking face generation methods on the CREMA and HDTF datasets, as shown in \cref{tab:quantitative_comparison}.

In terms of lip-sync performance, our method achieves the lowest PD (CREMA: 0.00860, HDTF: 0.01026), indicating strong geometric consistency between generated and the ground-truth lips. For CSLD and PCLD, which measure temporal consistency and correlation of lip motion, our method also achieves the best results (CREMA: 0.887/0.711, HDTF: 0.883/0.718). These improvements primarily benefit from the adaptive Structure Controller, which dynamically refines structure embeddings for aligned lip motion. Moreover, the fine-tuning-free backbone ensures robust generalization across datasets by leveraging large-scale pretraining on LAION dataset~\cite{schuhmann2022laion}.

Regarding visual quality, our method also surpasses all baselines, achieving FID/LPIPS of 1.5/0.020 on CREMA and 0.5/0.023 on HDTF. Meanwhile, our method achieves the highest CPBD values for image clarity (CREMA: 0.218, HDTF: 0.289), indicating that generated images are closer to real videos in terms of detail. This improvement results from the Structurist's disentanglement mechanism that maintains appearance, and the Noise Sensor, which effectively suppresses flicker and jitter. Notably, few-shot methods generally produce higher visual quality than one-shot methods due to the preservation of the unmasked area. Our method further extends this advantage by enhancing the realism of the mouth area.

%In summary, by integrating structural disentanglement, adaptive control, and temporal noise suppression into a fine-tuning-free diffusion backbone, our method achieves SOTA performance in both lip-sync accuracy and visual realism.

\subsubsection{Qualitative comparison}

\textbf{Visualization of Generated Frames.} To intuitively evaluate the lip-sync performance of our method, we compare its generated frames with those from multiple baseline methods on the CREMA and HDTF datasets. Specifically, we display two consecutive generated frames alongside their corresponding ground-truth lip reference frames, as shown in \cref{fig:vis_frames}. The results show that subtle or opposite lip movement trends are common among baseline methods, reflecting their limitations in lip-motion-driven modeling. In contrast, our method produces lip movements consistent with the ground-truth reference frames, avoiding incorrect directions or lack of motion, thereby achieving significant advantages in lip-sync accuracy.

\textbf{Visualization of Lip Distance.} To provide a more intuitive visualization of our method's effectiveness in lip-sync, we compare the lip distance sequences generated by our method with those of multiple baseline methods. Specifically, we select examples in \cref{fig:vis_frames} and plot the ground-truth lip distance sequences along with the sequences generated by each method. As shown in \cref{fig:vis_lip_dst}, our method's lip distance sequences closely match the ground-truth, accurately capturing the amplitude and rhythm of lip opening and closing, thereby achieving the highest CSLD scores among all methods. In contrast, baseline methods exhibit noticeable deviations in lip opening and closing rhythms, resulting in varying degrees of distortion in lip movements.

%These results indicate that our method generates visually more realistic lip movements, effectively improving lip-sync accuracy.

\subsection{Ablation study}

%\begin{figure}[t]
%	\centering
%	\includegraphics[width=\linewidth]{figs/vis_wo_structure}
%	\caption{Visualization results of the ablation study on the Structurist module. {\color{red}Red} arrows indicate that when the Structurist is not used (w/o. Structurist), texture and color information from the structure frame leak into the output frames, causing identity drift or appearance distortion.}
%	\label{fig:vis_wo_structure}
%\end{figure}

\textbf{Effectiveness of Structurist.} \cref{fig:vis_wo_structure} shows that without the Structurist, redundant texture and color from the structure frame leak into the output, causing identity drift and appearance distortion. Incorporating Structurist effectively suppresses such leakage, yielding results consistent with the GT ID. Additionally, since texture information incorporates expression information, Structurist is also capable of effectively preserving the mouth expressions of GT ID. These results validate the role of Structurist in effectively disentangling lip motion information from appearance features.

% Please add the following required packages to your document preamble:
%\begin{table}[t]
%	\centering
%	\resizebox{\columnwidth}{!}{%
%		\begin{tabular}{c|ccc|ccc}
%			\toprule
%			\multirow{2}{*}{\textbf{Structure Controller}} & \multicolumn{3}{c|}{\textbf{CREMA}} & \multicolumn{3}{c}{\textbf{HDTF}} \\
%			\cmidrule{2-7}
%			& \textbf{PD}$\downarrow$ & \textbf{CSLD}$\uparrow$ & \textbf{PCLD}$\uparrow$ & \textbf{PD}$\downarrow$ & \textbf{CSLD}$\uparrow$ & \textbf{PCLD}$\uparrow$ \\
%			\midrule
%			\ding{55} & 0.01035 & 0.858 & 0.650 & 0.01135 & 0.859 & 0.686 \\
%			\ding{51} & \textbf{0.00860} & \textbf{0.887} & \textbf{0.711} & \textbf{0.01026} & \textbf{0.883} & \textbf{0.718} \\
%			\bottomrule
%		\end{tabular}%
%	}
%	\caption{Quantitative results of the ablation study on the Structure Controller. ``\ding{55}" indicates without Structure Controller, and ``\ding{51}" indicates with Structure Controller. ``$\downarrow$" means lower is better, while ``$\uparrow$" means higher is better. The \textbf{bold} data indicates the best results.}
%	\label{tab:ablation_structure_controller}
%\end{table}

%\begin{figure}[t]
%	\centering
%	\includegraphics[width=\linewidth]{figs/vis_wo_structure_controller}
%	\caption{Visualization results of the ablation study on the Structure Controller.}
%	\label{fig:vis_wo_structure_controller}
%\end{figure}

\textbf{Effectiveness of Structure Controller.} As shown in \cref{fig:ablation_structure_controller}, removing Structure Controller (w/o.) leads to larger deviations between the generated and reference lip shapes. Introducing it (w/.) significantly reduces PD and improves CSLD and PCLD, indicating better alignment in both amplitude and dynamics. The visualization in \cref{fig:vis_wo_structure_controller} further confirms that the controller enables the generated lips to more accurately follow the reference (GT Lip), demonstrating its importance in adaptively correcting lip motion trends and enhancing lip-sync accuracy.

%\begin{table}[t]
%	\centering
%	%	\resizebox{\columnwidth}{!}{%
%		\begin{tabular}{c|cc|cc}
%			\toprule
%			\multirow{2}{*}{\textbf{Noise Sensor}} & \multicolumn{2}{c|}{\textbf{CREMA}} & \multicolumn{2}{c}{\textbf{HDTF}} \\ 
%			\cmidrule{2-5} 
%			& \textbf{FVD}$\downarrow$ & \textbf{MNP}$\downarrow$ & \textbf{FVD}$\downarrow$ & \textbf{MNP}$\downarrow$ \\ 
%			\midrule
%			\ding{55} & 349.7 & 0.775 & 537.9 & 1.316 \\
%			\ding{51} & \textbf{141.3} & \textbf{0.120} & \textbf{115.3} & \textbf{0.118} \\
%			\bottomrule
%		\end{tabular}%
%		%	}
%	\caption{Quantitative results of the ablation study on the Noise Sensor. The metric \textbf{MNP} denotes the Mean of Noise Pattern. ``\ding{55}" indicates without Structure Controller, and ``\ding{51}" indicates with Structure Controller. ``$\downarrow$" means lower is better. The \textbf{bold} data indicates the best results.}
%	\label{tab:ablation_noise_sensor}
%\end{table}

%\begin{figure}[t]
%	\centering
%	\includegraphics[width=\linewidth]{figs/vis_wo_noise_sensor}
%	\caption{Visualization results of the ablation study on the Noise Sensor. The {\color{red}red} arrows highlight the irregular variations in color and texture around the mouth region when the Noise Sensor is not used (w/o.), indicating the presence of jittering and flickering artifacts.}
%	\label{fig:vis_wo_noise_sensor}
%\end{figure}

\textbf{Effectiveness of Noise Sensor.} As shown in \cref{fig:ablation_noise_sensor}, removing Noise Sensor (w/o.) leads to notably higher FVD and MNP (\ie, Mean of Noise Pattern) values, indicating degraded temporal smoothness. In contrast, incorporating it (w/.) significantly reduces both metrics, confirming its role in enhancing temporal coherence. \cref{fig:vis_wo_noise_sensor} further shows that the Noise Sensor effectively suppresses mouth-region flickering and jittering (red arrows), yielding smoother and more stable video dynamics.

%\begin{figure}[t]
%	\centering
%	\includegraphics[width=0.8\linewidth]{figs/vis_wo_faceid}
%	\caption{Visualization results of the ablation study on ArcFace.}
%	\label{fig:vis_wo_faceid}
%\end{figure}

\textbf{Effectiveness of ArcFace.} Without ArcFace, the IP-Adapter degenerates from the FaceID version to the base version. As shown in \cref{fig:vis_wo_faceid}, the generated mouth region exhibits noticeable color and jawline discrepancies compared to the ground-truth identity face, resulting in a loss of identity information. In contrast, with ArcFace, the generated mouth region closely matches the real face. Visualization results demonstrate that incorporating ArcFace enhances identity preservation.

%\begin{figure}[t]
%	\centering
%	\includegraphics[width=0.8\linewidth]{figs/mnp_cpbd_kernel_analysis}
%	\caption{Quantitative results of the ablation study on temporal Gaussian smoothing with different kernel sizes. A kernel size of 1 indicates no smoothing is applied.}
%	\label{fig:mnp_cpbd_kernel_analysis}
%\end{figure}

%\begin{figure}[t]
%	\centering
%	\includegraphics[width=\linewidth]{figs/vis_diff_ks}
%	\caption{Visualization results of temporal Gaussian smoothing with different kernel sizes.}
%	\label{fig:vis_diff_ks}
%\end{figure}

\textbf{Comparison between Different Kernel Sizes.} As shown in \cref{fig:mnp_cpbd_kernel_analysis}, larger kernels effectively reduce flicker noise (lower MNP) but cause motion blur (lower CPBD). Visual results in \cref{fig:vis_diff_ks} confirm that excessive kernel sizes lead to mouth-region blurring and ghosting. Overall, a kernel size of 5 offers the best trade-off between temporal smoothness and visual clarity.
\section{Conclusion and future work}

This paper introduces a fine-tuning-free diffusion-based framework, FreeTalkDiff, for talking face generation, integrating the Structurist, Structure Controller, and Noise Sensor modules to enhance identity consistency, lip synchronization, and temporal smoothness. Experimental results have demonstrated superior lip-sync accuracy and visual quality on multiple datasets, validating the potential of pretrained diffusion models for high-fidelity talking face synthesis without fine-tuning. 
In future work, we plan to 1) explore multimodal fusion strategies that jointly model semantic, emotional, and expressive cues to enhance naturalness and expressiveness; and 2) integrate large language models with visual diffusion models to develop end-to-end parameter-frozen audiovisual generation systems, offering a new framework for multimodal digital human synthesis.
\section{Acknowledgments}

This work is supported by the Natural Science Foundation for Excellent Young Scholars of Henan Province (Grants No. 252300421233), the National Natural Science Foundation of China (Grants No. U23A20305, 62472229, U24B20179), the Innovation Scientists and Technicians Troop Construction Projects of Henan Province (Grants No. 254000510007), and the Research Project of Quan Cheng Laboratory (Grants No. QCL20250203).

{
	\small
	\bibliographystyle{ieeenat_fullname}
	\bibliography{main}
}
% WARNING: do not forget to delete the supplementary pages from your submission 
\clearpage
\setcounter{page}{1}
\maketitlesupplementary
\setcounter{section}{0} % 章节从1开始
\setcounter{figure}{0}   % 图片从1开始
\setcounter{table}{0}    % 表格从1开始
\setcounter{equation}{0} % 公式从1开始

\section{The proof of \cref{thm:var_noise}}
\label{sec:proof_var_noise}

This section provides a detailed proof of \cref{thm:var_noise}, aiming to rigorously justify the theoretical claims presented in the main paper. All derivations and intermediate steps are included to ensure completeness and clarity.

Taking the $x$-direction component of the random variable $\mathbf{V}_{ij}$ as an example, we define it as a new random variable $X$. The optical flow component in the $x$ direction of the real video is denoted by $X^{real}$, that of the generated video by $X^{fake}$, the ideal flicker-free and jitter-free version by $Z$, and the noise that causes flicker and jitter by $R$. Therefore, according to the Gaussian prior, these random variables satisfy
\begin{equation}
	X^{real} \sim \mathcal{N}(\mu_{X^{real}}, \sigma_{X^{real}}^2),
\end{equation}
\begin{equation}
	X^{fake} = Z + R \sim \mathcal{N}(\mu_{X^{fake}}, \sigma_{X^{fake}}^2),
\end{equation}

Since the distributions of $Z$ and $R$ are unknown, we aim to approximate the ideal flicker-free motion pattern $Z$ using a Gaussian random variable $\hat{Z}$ according to the Gaussian prior. Assuming that the desired motion pattern should be consistent with the real one, $\hat{Z}$ can be derived from $X^{real}$ through a linear transformation with parameters $\alpha$ and $\beta$:
\begin{equation}
	\hat{Z} = \alpha X^{real} + \beta \sim \mathcal{N}(\alpha \mu_{X^{real}} + \beta, \alpha^2 \sigma_{X^{real}}^2).
\end{equation}

We derive the optimal estimation of $\hat{Z}$ through the following lemma.
\begin{lemma}\label{lem:z_estimation}
	To minimize the estimation error $\mathbb{E}[(Z - \hat{Z})^2]$, the optimal parameters $\alpha$ and $\beta$ are given by
	\begin{equation}
		\begin{cases}
			\alpha^* = \frac{\mathrm{Cov}(Z, X^{real})}{\sigma_{X^{real}}^2} \\
			\beta^* = \mu_Z - \frac{\mathrm{Cov}(Z, X^{real})}{\sigma_{X^{real}}^2} \mu_{X^{real}}
		\end{cases}.
	\end{equation}
	Consequently, the optimal estimation $\hat{Z}^*$ can be expressed as
	\begin{equation}
		\hat{Z}^* = \frac{\mathrm{Cov}(Z, X^{real})}{\sigma_{X^{real}}^2} (X^{real} - \mu_{X^{real}}) + \mu_Z.
	\end{equation}
\end{lemma}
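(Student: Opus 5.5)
This is the classical best linear estimator problem, solved by expanding the mean squared error as a quadratic in $(\alpha,\beta)$ and setting its gradient to zero. Write the objective as
\[
J(\alpha,\beta)=\mathbb{E}\big[(Z-\alpha X^{real}-\beta)^2\big].
\]
Throughout I assume $Z$ and $X^{real}$ have finite second moments and that $\sigma_{X^{real}}^2>0$. These are implicit in the Gaussian prior of \cref{hyp:gaussian} and in the displayed formula, which divides by $\sigma_{X^{real}}^2$.

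\textbf{Optimal $\beta$.} First I fix $\alpha$ and optimise over $\beta$. Since $J$ is a convex quadratic in $\beta$, setting $\partial J/\partial\beta=-2\,\mathbb{E}[Z-\alpha X^{real}-\beta]=0$ gives
\[
\beta=\mu_Z-\alpha\,\mu_{X^{real}}.
\]
Substituting this back centres both variables, so
\[
J(\alpha)=\mathbb{E}\big[\big((Z-\mu_Z)-\alpha(X^{real}-\mu_{X^{real}})\big)^2\big]=\sigma_Z^2-2\alpha\,\mathrm{Cov}(Z,X^{real})+\alpha^2\sigma_{X^{real}}^2 .
\]

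\textbf{Optimal $\alpha$.} This is a strictly convex parabola in $\alpha$ because $\sigma_{X^{real}}^2>0$. Its unique minimiser is
\[
\alpha^*=\frac{\mathrm{Cov}(Z,X^{real})}{\sigma_{X^{real}}^2}.
\]
Plugging $\alpha^*$ into the formula for $\beta$ gives $\beta^*=\mu_Z-\alpha^*\mu_{X^{real}}$. Then
\[
\hat Z^*=\alpha^*X^{real}+\beta^*=\alpha^*\big(X^{real}-\mu_{X^{real}}\big)+\mu_Z,
\]
which is the stated expression.

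\textbf{Global optimality.} To justify that this stationary point is the global minimum, I would note that the Hessian of $J$ in $(\alpha,\beta)$ is
\[
2\begin{pmatrix}\mathbb{E}[(X^{real})^2] & \mu_{X^{real}}\\ \mu_{X^{real}} & 1\end{pmatrix}.
\]
Its determinant is $4\sigma_{X^{real}}^2>0$ and its $(2,2)$ entry is positive, so it is positive definite. Alternatively, the two-stage minimisation above already establishes optimality directly.

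The only genuine subtlety is that $Z$ is unobservable, so $\mathrm{Cov}(Z,X^{real})$ and $\mu_Z$ are left symbolic in the lemma. This causes no difficulty for the lemma itself. It only matters later, when the theorem replaces these quantities by expressions involving $X^{fake}$.
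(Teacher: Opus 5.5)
Your proof is correct and follows essentially the same route as the paper, which sets $\partial J/\partial\alpha$ and $\partial J/\partial\beta$ to zero and solves the resulting pair of equations; the only difference is that you eliminate $\beta$ first instead of solving both equations jointly. Your convexity check (the Hessian with determinant $4\sigma_{X^{real}}^2>0$, or equivalently the two-stage minimisation) also shows that the stationary point is the global minimum, which the paper's proof does not verify.
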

\begin{proof}
	To derive the optimal estimation of $\hat{Z}$, we define the estimation error as
	\begin{equation}
		J(\alpha, \beta) = \mathbb{E}[(Z - \hat{Z})^2] = \mathbb{E}[(Z - (\alpha X^{real} + \beta))^2].
	\end{equation}
	To minimize the estimation error, the first-order derivatives with respect to $\alpha$ and $\beta$ must vanish:
	\begin{equation}
		\begin{cases}
			\frac{\partial J}{\partial \alpha} = -2 \mathbb{E}[(Z - (\alpha X^{real} + \beta)) X^{real}] = 0, \\
			\frac{\partial J}{\partial \beta} = -2 \mathbb{E}[Z - (\alpha X^{real} + \beta)] = 0.
		\end{cases}
	\end{equation}
	Solving the above equations yields the optimal parameters:
	\begin{equation}
		\begin{cases}
			\alpha^* = \frac{\mathrm{Cov}(Z, X^{real})}{\sigma_{X^{real}}^2}, \\
			\beta^* = \mu_Z - \frac{\mathrm{Cov}(Z, X^{real})}{\sigma_{X^{real}}^2} \mu_{X^{real}}.
		\end{cases}
	\end{equation}
	Substituting these parameters into $\hat{Z} = \alpha X^{real} + \beta$ gives the optimal estimation:
	\begin{equation}
		\hat{Z}^* = \frac{\mathrm{Cov}(Z, X^{real})}{\sigma_{X^{real}}^2} (X^{real} - \mu_{X^{real}}) + \mu_Z.
	\end{equation}
	This result indicates that the optimal estimation $\hat{Z}^*$ can be interpreted as a linear regression prediction of the real optical flow distribution $X^{real}$, where the regression coefficient is governed by the covariance relationship between the ideal motion pattern $Z$ and the real one $X^{real}$.
\end{proof}

Based on the optimal estimation $\hat{Z}^*$ derived in \cref{lem:z_estimation}, the estimated noise component can be expressed as
\begin{equation}
	\hat{R} = X^{fake} - \hat{Z}^*.
\end{equation}
Assuming that flicker and jitter are induced by the inherent diffusion noise, \ie, $R$ is statistically independent of the other random variables, the variance of the estimated noise can be written as
\begin{align}
	\sigma_{\hat{R}}^2 &= \mathrm{Var}(X^{fake} - \hat{Z}^*) \\
	&= \sigma_{X^{fake}}^2 + \sigma_{\hat{Z}^*}^2 - 2 \mathrm{Cov}(X^{fake}, \hat{Z}^*) \\
	&= \sigma_{X^{fake}}^2 + \frac{\mathrm{Cov}^2(Z, X^{real})}{\sigma_{X^{real}}^2} \\
	&- 2 \frac{\mathrm{Cov}(Z, X^{real})}{\sigma_{X^{real}}^2} \mathrm{Cov}(X^{fake}, X^{real}) \\
%	&= \sigma_{X^{fake}}^2 + \frac{(\mathrm{Cov}(X^{fake}, X^{real}) - \mathrm{Cov}(R, X^{real}))^2}{\sigma_{X^{real}}^2} \\
	&=\!\sigma_{X^{fake}}^2\!+\!\frac{\!(\!\mathrm{Cov}\!(\!X^{fake},\!X^{real}\!)\!-\!\mathrm{Cov}\!(\!R,\!X^{real}\!)\!)^2}{\sigma_{X^{real}}^2}\!\\
%	&- 2 \frac{\mathrm{Cov}(X^{fake}, X^{real}) - \mathrm{Cov}(R, X^{real})}{\sigma_{X^{real}}^2} \mathrm{Cov}(X^{fake}, X^{real}) \\
	&-\!2\!\frac{\mathrm{Cov}\!(\!X^{fake},\!X^{real}\!)\!-\!\mathrm{Cov}\!(\!R,\!X^{real}\!)\!}{\sigma_{X^{real}}^2} \mathrm{Cov}\!(\!X^{fake},\!X^{real}\!)\!\\
	&= \sigma_{X^{fake}}^2 - \frac{\mathrm{Cov}^2(X^{fake}, X^{real})}{\sigma_{X^{real}}^2}.
\end{align}

\begin{figure*}[!t]
\centering
\includegraphics[width=\linewidth]{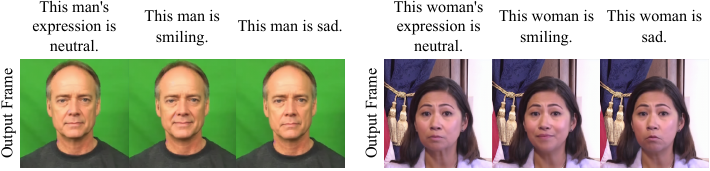}
\caption{Visualization of mouth expression controlled by the text prompt.}
\label{fig:vis_prompt}
\end{figure*}

\begin{figure*}[!t]
\centering
\includegraphics[width=\linewidth]{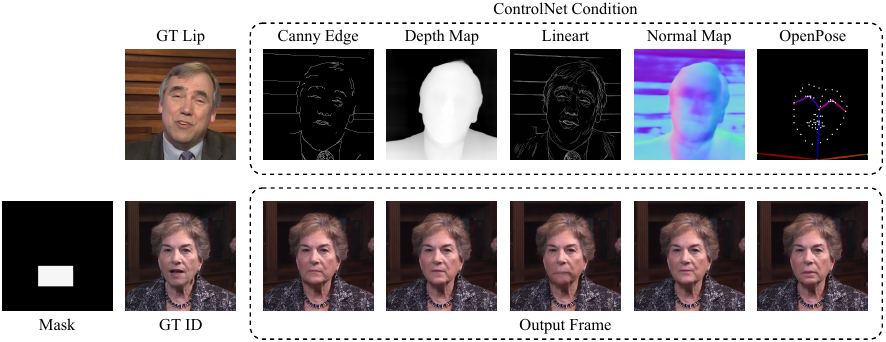}
\caption{Visualization of lip control using different ControlNet conditions.}
\label{fig:vis_controlnet}
\end{figure*}

The result shows that the noise variance depends on the covariance between the generated and real optical flows. The non-negativity of this variance is guaranteed by the Cauchy–Schwarz inequality:
\begin{align}
	\sigma_{\hat{R}}^2 &= \sigma_{X^{fake}}^2 - \frac{\mathrm{Cov}^2(X^{fake}, X^{real})}{\sigma_{X^{real}}^2} \\
	&\geq \sigma_{X^{fake}}^2 - \frac{\sigma_{X^{fake}}^2 \sigma_{X^{real}}^2}{\sigma_{X^{real}}^2} \\
	&= 0.
\end{align}

Similarly, the same result holds for the $y$-direction component of the random variable $\mathbf{V}_{ij}$.

\section{Discussion} 

In this section, we discuss several key aspects of our work. We first explore the controllability of mouth expressions under few-shot settings, then examine the potential development of AnimateDiff and IP-Adapter communities as a backbone for fine-tuning-free talking face generation, and finally analyze the effectiveness of pretrained ControlNet for lip control.

\textbf{Controllable Mouth Expression.} In the few-shot setting, the generated mouth region typically retains the original expression. As demonstrated in the ablation study of the Structurist, the mouth expression primarily originates from the texture information of the structural frame and effectively preserves the expression features of the identity reference. As a functional extension, we further explore the ability to control mouth expressions via text prompts. As shown in \cref{fig:vis_prompt}, when expression-related semantics are injected into the generated frames through the text prompts, the mouth region responds accordingly, demonstrating a certain level of controllability. However, such text-prompt-based expression modification may sometimes cause inconsistencies with other facial regions, leading to a slightly unnatural appearance.

\begin{figure*}[!t]
	\centering
	\includegraphics[width=\linewidth]{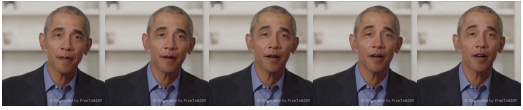}
	\caption{Visualization of the visible watermark.}
	\label{fig:vis_watermark}
\end{figure*}

\textbf{Development of AnimateDiff and IP-Adapter Communities.} AnimateDiff~\cite{guo2024animatediff} extends SD 1.5 by introducing pluggable motion modules and motion LoRA, enabling coherent video generation. This design does not compromise the compatibility with IP-Adapter, which also operates as a plugin within the SD 1.5 framework. Consequently, AnimateDiff can be paired with IP-Adapter to form a \textit{seemingly} feasible fine-tuning-free backbone for talking face generation. However, the current IP-Adapter only provides a shared lip reference for the entire input video clip, rather than assigning distinct lip features to each frame. This limitation causes the ``AnimateDiff + IP-Adapter" backbone to produce static lip shape across frames, preventing it from achieving fine-tuning-free talking face generation. Looking ahead, with the continued evolution of the AnimateDiff and IP-Adapter communities and the increasing modularity and openness of these frameworks, we believe this combined backbone holds great potential to advance fine-tuning-free talking face generation.

\textbf{Analysis of Pretrained ControlNet for Lip Control.} To investigate whether pretrained ControlNet~\cite{zhang2023adding} can effectively control lip generation, we conduct conditional generation experiments using various types of ControlNet inputs, including Canny edge, depth map, lineart, normal map, and OpenPose. As shown in \cref{fig:vis_controlnet}, the generated results reveal that pretrained ControlNet fails to accurately control mouth shapes. This limitation may stem from two main factors: 1) many ControlNet conditions (\eg, Canny edge, depth map, and normal map) tend to destroy lip-related details, and 2) the internal representations of ControlNet are generally not designed to capture fine-grained lip cues—although OpenPose provides mouth landmarks, its pretrained model focuses more on global posture rather than localized mouth dynamics. These findings indicate that pretrained ControlNet lacks explicit sensitivity to lip structures, making it unsuitable for precise lip control. %We hope that future developments in the ControlNet community will introduce lip-aware conditioning mechanisms to better support fine-tuning-free talking face generation.

\section{Ethical considerations}

we recognize that realistic talking face generation may raise ethical concerns regarding potential misuse, such as creating deceptive or malicious deepfake content. To mitigate such risks, all generated videos in our study can be clearly marked as synthetic (\cref{fig:vis_watermark}), ensuring transparent presentation of results. We strongly advocate that any future application of this technology should comply with ethical standards and obtain informed consent when involving identifiable individuals.

\end{document}